\title{Scoring Rules for Performative Binary Prediction}
\author{%
  Alan Chan \\
  Mila, Université de Montréal\\
  Montréal, QC, Canada\\
  \texttt{alan.chan@mila.quebec} \\
}
\newcommand{\argmax}{\mathrm{arg\,max}}
\newcommand{\defeq}{\doteq}
\newcommand{\driftmodel}{{\phi_1}}
\newtheorem{definition}{Definition}
\begin{document}

\maketitle

\begin{abstract}
  We construct a model of expert prediction where predictions can influence the state of the world. Under this model, we show through theoretical and numerical results that proper scoring rules can incentivize experts to manipulate the world with their predictions. We also construct a simple class of scoring rules that avoids this problem. 
\end{abstract}

\section{Introduction}
Algorithmic systems commonly provide predictions to inform user decisions. 
The consequential application of such systems in areas such as sentencing \citep{barabas_interventions_2018,barabas_studying_2020} and content recommendation necessitates that these systems act in societal interests. 

One way to model the interaction between any advisors--human or not--and users is through a binary prediction game. The user is interested in the true probability $p$ of the Bernouilli event $x$. The user chooses a scoring rule $f$. The expert observes $f$ and makes a prediction $\hat p$. $x$ is then drawn; if $x = 1$, the expert obtains reward $f(\hat p)$, and otherwise receives reward $f(1 - \hat p)$. The expert's objective is to maximize the expected reward $r(\hat p) \defeq p f(\hat p) + (1 - p) f(1 - \hat p)$: typically, it is assumed that $\hat p = \argmax_{p'} r(p')$. The goal is to select $f$ so that $p = \argmax_{p'} r(p')$. Under this model, (strictly) proper scoring rules provide a positive answer to this problem \citep{gneiting_strictly_2007}.

This model neglects that predictions can influence the underlying probability distribution--the predictions are \textit{performative}. A prediction $\hat p$ can result in a new true probability of $\phi(\hat p)$, for some function $\phi$. 
For example, prediction of a high rate of inflation might result in people buying goods before their cash reserves depreciate too much, thereby causing inflation. Under performativity, experts may have an incentive to manipulate the world to achieve a larger reward, despite a proper scoring rule.

At the same time, experts are not immune to scrutiny. If a financial institution has misreported a key economic figure, one could subpoena documents to determine if the expert has misreported. 
In many applications, it is possible for users to audit experts and impose costs upon audit failure.   

We extend the standard model of binary prediction to include both performative predictions and audits. We will show the following under our models: \textbf{(1) Strictly proper scoring rules fail to deter experts from manipulation}; \textbf{(2) there is a simple class of scoring rules which disincentivizes manipulation}.

\section{Formal Model}
In this section, we define our notation and model. We also formalize the idea that experts should not use predictions to manipulate the world.

\subsection{Two Models of Performativity}
We will first discuss some options $\phi$ for how the true probability evolves upon an expert forecast. 

Let $\alpha \in [0, 1]$ and define
\begin{align}
    \phi_1(\hat p) \doteq \alpha \hat p + (1 - \alpha) p.
\end{align}
$\alpha$ controls the drift of $p$ towards the prediction $\hat p$. If $\alpha = 0$, then the expert's forecast does not affect the underlying distribution. A large value of $\alpha$ corresponds to self-fulfilling prophecies, such as in forecasting inflation. We will refer to such a $\phi_1$ as modeling \textit{drift}. 

Another possibility for $\phi$ comes from noting that if $\hat p$ is an extreme value (i.e., close to 0 or 1), $p$ becomes closer to $0.5$ (i.e., less predictable). For example, a prediction that one election candidate will win with near certainty might lead voters for that candidate not to show up on the day of the election, resulting in a closer result. To measure closeness to 0 or 1, let $\psi(\hat p) = 4(x - 0.5)^2$, which is in $[0, 1]$, minimized at $0.5$, and maximized at $0, 1$. 
\begin{align}
    \phi_2(\hat p) &\doteq \psi(\hat p) \cdot 0.5 + (1 - \psi(\hat p)) p.
\end{align}
We will refer to such a $\phi_2$ as modeling \textit{reversion}.

\subsection{Auditing}
Let us discuss the probability that the we discover that the expert has misreported. It is plausible that this probability is higher for more egregious violations---that is, when $\hat p$ is far from $p$. On the other hand, when $\hat p \approx p$, the probability of a violation is small. We will model this probability with a Bregman divergence $D_F : [0, 1] \times [0, 1] \to [0, 1]$ between $\hat p$ and $p$, where $F$ is a twice-differentiable, strictly convex function. We will let $c > 0$ represent the cost imposed upon a failed audit, and will set $c$ as necessary in our theoretical results. The expected cost of an audit is $D_F(\hat p, p) c$. 

\subsection{Expected Reward}
Our discussion so far implies that the expected reward of the expert upon forecast $\hat p$ is
\begin{align}\label{eq:expected-reward}
    r_\phi(\hat p) = \phi(\hat p) f(\hat p) + (1 - \phi(\hat p)) f(1 - \hat p) - D_F(\hat p, p) c.
\end{align}
We will be interested in whether a given scoring rule ensures that the expert has an incentive to forecast $p$. 
\begin{definition}\label{def:manipulation}
A scoring rule $f$ is incentive-compatible under $\phi$ if for any $\hat p \in [0, 1] \setminus \{p\}$, it is true that $r_\phi(p) > r_\phi(\hat p)$. 
\end{definition}
To avoid confusion, we will only use the term \textit{strictly proper scoring rule} to refer to scoring rules $f$ that are incentive-compatible under $\phi(p) = p$ (i.e., no performativity). 

We want to understand \textbf{(1) whether strictly proper scoring rules are incentive-compatible under $\phi_1$, $\phi_2$} and \textbf{(2) if there are scoring rules that are incentive-compatible $\phi_1$, $\phi_2$}. 

\section{Proper Scoring Rules are not Incentive-compatible under Performativity}
We will show that no matter which proper scoring rule we use, as long as $p \in (0, 1) \setminus \{0.5\}$, $p$ cannot be a local maximum of $r_{\phi}$, for $\phi \in \{\phi_1, \phi_2\}$. 

The following proposition assumes that $\alpha \neq 0$. The assumption is equivalent to assuming that $\phi_2(\hat p) \neq p$; if $\alpha = 0$, then a forecast would have no effect on the probability of the event $x$, which would be the usual setting of binary prediction.

\begin{restatable}{proposition}{driftproperscoring}\label{prop:drift-proper-scoring}
    Let $f$ be a strictly proper scoring rule and suppose that $\alpha \neq 0$. For any $p \in (0, 1) \setminus\{0.5\}$, $\hat p = p$ is not a local maximizer of $r_{\phi_1}(\hat p)$.
\end{restatable}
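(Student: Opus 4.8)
The plan is to show that $r_{\phi_1}$ has a nonvanishing first derivative at $\hat p = p$. Since $p\in(0,1)$ is an interior point of the domain and (under the standing assumption that $f$ is differentiable, as for the log and Brier rules) $r_{\phi_1}$ is differentiable there, a nonzero derivative immediately rules out a local maximum, because a differentiable function is strictly monotone in a neighborhood of any point where its derivative does not vanish.

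First I would rewrite the reward so that the performative contribution is isolated. Using $\phi_1(\hat p) = p + \alpha(\hat p - p)$ in \cref{eq:expected-reward} gives
\begin{align}\label{eq:decomp}
    r_{\phi_1}(\hat p) = \underbrace{p f(\hat p) + (1-p) f(1-\hat p)}_{r_0(\hat p)} + \alpha(\hat p - p)\bigl[f(\hat p) - f(1-\hat p)\bigr] - D_F(\hat p, p)\,c,
\end{align}
where $r_0$ is exactly the non-performative expected reward. I would then differentiate \cref{eq:decomp} termwise at $\hat p = p$. Because $f$ is strictly proper, $r_0$ attains its unique maximum at the interior point $\hat p = p$, so the first-order condition gives $r_0'(p) = 0$. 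Writing the Bregman term as $D_F(\hat p, p) = F(\hat p) - F(p) - F'(p)(\hat p - p)$, its derivative is $F'(\hat p) - F'(p)$, which vanishes at $\hat p = p$; thus audits contribute nothing at first order. The middle term differentiates to $\alpha[f(\hat p) - f(1-\hat p)] + \alpha(\hat p - p)[f'(\hat p) + f'(1-\hat p)]$, and the second summand vanishes at $\hat p = p$. Collecting these, $r_{\phi_1}'(p) = \alpha\bigl[f(p) - f(1-p)\bigr]$.

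It remains to show $f(p) \neq f(1-p)$ whenever $p \neq 0.5$, and this is the main step beyond routine calculus, since it is where strict properness is genuinely used. I would invoke the convex-analytic structure of proper scoring rules: the expected-score function $G(p) \defeq \max_{q}\,[p f(q) + (1-p) f(1-q)] = p f(p) + (1-p) f(1-p)$ is a pointwise maximum of functions affine in $p$, hence convex, and strict properness upgrades this to strict convexity, so $G'$ is strictly increasing. By the envelope theorem the optimizer $q = p$ gives $G'(p) = f(p) - f(1-p)$. Evaluating at $p = 0.5$ yields $G'(0.5) = f(0.5) - f(0.5) = 0$, and strict monotonicity of $G'$ then forces $G'(p) = f(p) - f(1-p) \neq 0$ for all $p \neq 0.5$ (indeed positive for $p>0.5$ and negative for $p<0.5$).

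Combining the two pieces, $r_{\phi_1}'(p) = \alpha\bigl(f(p) - f(1-p)\bigr) \neq 0$ for $\alpha \neq 0$ and $p \in (0,1)\setminus\{0.5\}$, so $\hat p = p$ is not a local maximizer of $r_{\phi_1}$. I expect the only delicate point to be the justification that $G$ is strictly convex and differentiable with $G'(p) = f(p) - f(1-p)$; if one prefers to avoid the envelope argument, the same conclusion follows by differentiating the identity defining $G$ directly and combining it with the first-order condition $p f'(p) = (1-p) f'(1-p)$ already used for $r_0$, though some care is then needed to argue $f(p)-f(1-p)$ changes sign only at $0.5$.
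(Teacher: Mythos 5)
Your proof is correct and arrives at the same key identity as the paper, $r_{\phi_1}'(p) = \alpha\,(f(p) - f(1-p))$, but it gets there and closes the argument by a somewhat different route. The paper differentiates $r_{\phi_1}$ directly and cancels the $f'$ terms at $\hat p = p$ using the identity $\hat p f'(\hat p) = (1-\hat p) f'(1-\hat p)$ from \Cref{lemma:basic-fact}; your decomposition $r_{\phi_1}(\hat p) = r_0(\hat p) + \alpha(\hat p - p)\,[f(\hat p) - f(1-\hat p)] - D_F(\hat p, p)\,c$ replaces that cancellation with the cleaner observation that $r_0'(p) = 0$ is just the first-order condition of properness at an interior maximum, so the non-performative part contributes nothing at first order by definition. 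For the final step the paper again leans on \Cref{lemma:basic-fact} ($f$ is strictly increasing, hence $f(p) \neq f(1-p)$ for $p \neq 0.5$), whereas you identify $f(p) - f(1-p)$ with $G'(p)$ for the expected-score function $G(p) = p f(p) + (1-p) f(1-p)$, note $G'(0.5) = 0$, and use strict convexity of $G$. Both routes ultimately rest on one structural fact about strictly proper rules (strict monotonicity of $f$ versus strict convexity of the Savage function $G$); yours has the advantage of delivering the sign of $r_{\phi_1}'(p)$ for free and generalizing more readily beyond the binary case, and you correctly flag that strict convexity of $G$ is the one step requiring a real argument. One minor imprecision: a nonzero derivative at an interior point does rule out a local extremum there (which is all you need), but it does not in general make a differentiable function strictly monotone on a neighborhood of that point, so the opening justification should be stated in the weaker form.
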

\begin{proof}
Our general strategy is to show that $p \neq 0.5$ cannot be a stationary point of $r_{\phi_1}(\hat p)$. To do so, we first calculate some derivatives.
\begin{align*}
    \partial_{\hat p} r(\hat p) &= \phi'(\hat p) f(\hat p) - \phi'(\hat p) f(1 - \hat p) + \phi(\hat p) f'(\hat p) - (1 - \phi(\hat p)) f'(1 - \hat p) - \partial_{\hat p} D_F(\hat p, p) c\\
    \phi_1'(\hat p) &= \alpha.
\end{align*}
Since $D_F(\hat p, p)$ is minimized in the first argument at $\hat p = p$, it must be that $\partial_{\hat p} D_F(\hat p, p)\mid_{\hat p = p} = 0$. In what follows, we will use the fact that $\hat p f'(\hat p) = (1 - \hat p) f'(1 - \hat p)$. Note that $\phi_1(p) = p$, so that
\begin{align*}
    &\phi_1(p) f'(p) - (1 - \phi_1(p)) f'(1 - p) = p f'(p) - (1 - p)f'(1 - p) = 0. 
\end{align*}
Substituting back into our expression for the derivative of $r_{\phi_1}(\hat p)$,
\begin{align*}
    \partial_{\hat p} r(\hat p)\mid_{\hat p = p} &= \alpha (f(p) - f(1 - p)).
\end{align*}
The above is zero if $p = 0.5$, but otherwise is not because $\alpha \neq 0$ and $f$ is strictly increasing by \Cref{lemma:basic-fact} in \Cref{app:proofs}.
\end{proof}

Now, let's consider $\phi_2$. The proof generally follows the same ideas as with $\phi_1$. 
\begin{restatable}{proposition}{revertproperscoring}\label{prop:revert-proper-scoring}
Let $f$ be a strictly proper scoring rule. For any $p \in (0, 1) \setminus\{0.5\}$, $\hat p = p$ is not a local maximizer of $r_{\phi_2}(\hat p)$.
\end{restatable}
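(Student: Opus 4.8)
The plan is to mirror the strategy of \Cref{prop:drift-proper-scoring}: rather than analyzing second-order behavior, I would show that $\hat p = p$ fails to be a stationary point of $r_{\phi_2}$, which already rules it out as a local maximizer. I would start by recording the derivatives of $\phi_2$. Writing $\phi_2(\hat p) = p + \psi(\hat p)(0.5 - p)$ with $\psi(\hat p) = 4(\hat p - 0.5)^2$, I get $\psi'(\hat p) = 8(\hat p - 0.5)$, so that $\phi_2'(\hat p) = 8(\hat p - 0.5)(0.5 - p)$ and in particular $\phi_2'(p) = -8(p - 0.5)^2$. I would again invoke $\partial_{\hat p} D_F(\hat p, p)\mid_{\hat p = p} = 0$, since $D_F$ is minimized in its first argument at $\hat p = p$.

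The crucial difference from the drift case is that $\phi_2(p) \neq p$ when $p \neq 0.5$; indeed $\phi_2(p) = p - 4(p - 0.5)^3$. Consequently the term $\phi_2(p) f'(p) - (1 - \phi_2(p)) f'(1-p)$ no longer vanishes, whereas in \Cref{prop:drift-proper-scoring} it did (because $\phi_1(p) = p$). Writing $\phi_2(p) = p + \delta$ with $\delta = -4(p-0.5)^3$ and using the identity $p f'(p) = (1-p) f'(1-p)$ valid for any proper scoring rule, this term collapses to $\delta\bigl(f'(p) + f'(1-p)\bigr)$. Combining the pieces, I expect to reach
\begin{align*}
\partial_{\hat p} r_{\phi_2}(\hat p)\mid_{\hat p = p} = -8(p-0.5)^2 \bigl(f(p) - f(1-p)\bigr) - 4(p-0.5)^3 \bigl(f'(p) + f'(1-p)\bigr).
\end{align*}

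It then remains to show this is nonzero for $p \in (0,1) \setminus \{0.5\}$. Factoring out $-4(p-0.5)^2$, it suffices to show that $g(p) \defeq 2\bigl(f(p) - f(1-p)\bigr) + (p - 0.5)\bigl(f'(p) + f'(1-p)\bigr)$ is nonzero. To control its sign in a rule-independent way, I would introduce the auxiliary function $h(\hat p) \defeq f(\hat p) - f(1 - \hat p)$, which satisfies $h(0.5) = 0$ and, because $f$ is strictly increasing by \Cref{lemma:basic-fact}, is itself strictly increasing with $h'(\hat p) = f'(\hat p) + f'(1-\hat p) \geq 0$. Then $g(p) = 2 h(p) + (p - 0.5) h'(p)$, and a sign analysis finishes the argument: for $p > 0.5$ both $h(p) > 0$ and $(p-0.5) h'(p) \geq 0$, so $g(p) > 0$; for $p < 0.5$ one has $2h(p) < 0$ strictly and $(p-0.5) h'(p) \leq 0$, so $g(p) < 0$. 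Either way $g(p) \neq 0$, hence $p$ is not stationary.

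I expect the main obstacle to be the non-vanishing term $\phi_2(p) f'(p) - (1 - \phi_2(p)) f'(1-p)$, which has no analogue in the drift proof and forces the extra algebraic simplification through $\delta$. Once the derivative is in the factored form, the clean part is recognizing that the \emph{monotonicity} of $h$, rather than any finer structural property of the particular scoring rule, is exactly what pins down the sign and yields a conclusion uniform over all strictly proper $f$.
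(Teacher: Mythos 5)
Your proposal is correct and follows the same strategy as the paper's proof: show $\hat p = p$ is not a stationary point of $r_{\phi_2}$ by computing $\partial_{\hat p} r_{\phi_2}\mid_{\hat p = p}$, invoking the identity $\hat p f'(\hat p) = (1-\hat p)f'(1-\hat p)$ and the strict monotonicity of $f$ from \Cref{lemma:basic-fact}, and concluding with a sign analysis on either side of $p = 0.5$ (your final expression is algebraically equivalent to the paper's). Your packaging via $\delta = \phi_2(p) - p$ and $h(\hat p) = f(\hat p) - f(1-\hat p)$ is a cleaner route through the same algebra, and it even handles the edge case $f'(p) = 0$ slightly more carefully than the paper's phrasing.
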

\begin{proof}
Our strategy is the same as with $\phi_1$. We want to show that $p$ cannot be a stationary point of $r$.
\begin{align*}
    \partial_{\hat p} r_{\phi_2}(\hat p) &= \phi_2'(\hat p) f(\hat p) - \phi_2'(\hat p) f(1 - \hat p) + \phi_2(\hat p) f'(\hat p) - (1 - \phi_2(\hat p)) f'(1 - \hat p) - \partial_{\hat p} D_F(\hat p, p) c,\\
    \phi_2'(\hat p) &= 4 (\hat p - 0.5) + p(4 - 8 \hat p) = \hat p(4 - 8p) - 2 + 4p.
\end{align*}
Again, $\partial_{\hat p} D_F(\hat p, p) qc \mid_{\hat p = p} = 0$. It will be helpful to expand $\phi_2$. 
\begin{align*}
    \phi_2(\hat p) &= 2(\hat p - 1 /2)^2 + p(4 \hat p - 4 \hat p^2)\\
        &= 2 \hat p ^2 + 0.5 - 2 \hat p + 4p \hat p - 4 p \hat p^2
\end{align*}
In what follows, we will use the fact that $\hat p f'(\hat p) = (1 - \hat p) f'(1 - \hat p)$ multiple times. Substituting the expansion of $\phi_2(\hat p)$ into one group of terms in the derivative of $r_{\phi_2}(\hat p)$, we obtain
\begin{align*}
    &\phi_2(\hat p) f'(\hat p) - (1 - \phi_2(\hat p)) f'(1 - \hat p)\\
    &\quad= [2 \hat p ^2 + 0.5 - 2 \hat p + 4p \hat p - 4 p \hat p^2] f'(\hat p) - (1 - 2 \hat p ^2 - 0.5 + 2 \hat p - 4p \hat p + 4 p \hat p^2) f'(1 - \hat p)\\
    &\quad= [2 \hat p \hat p + 0.5 - 2 \hat p + 4p \hat p - 4 p \hat p^2] f'(\hat p) - (1 - 0.5 + 2 \hat p (1 - \hat p) - 4p \hat p + 4 p \hat p^2) f'(1 - \hat p)\\
    &\quad= [ 0.5 - 2 \hat p + 4p \hat p - 4 p \hat p^2] f'(\hat p) - (1 - 0.5 - 4p \hat p + 4 p \hat p^2) f'(1 - \hat p)\\
    &\quad= [ 0.5 - 2 \hat p + 4p \hat p - 4 p \hat p \hat p] f'(\hat p) - (1 - 0.5 - 4p \hat p(1 - \hat p)) f'(1 - \hat p)\\
    &\quad = 2 \hat p[2p - 1] f'(\hat p) + 0.5(f'(\hat p) - f'(1 - \hat p)).
\end{align*}
When we look at the other group of terms,
\begin{align*}
    \phi_2'(\hat p) f(\hat p) - \phi_2'(\hat p) f(1 - \hat p) &= (\hat p(4 - 8p) - 2 + 4p) (f(\hat p) - f(1 - \hat p)).
\end{align*}
Putting everything together, we have
\begin{align*}
    \partial_{\hat p} r_{\phi_2}(\hat p) \mid_{\hat p = p} &= -8(p - 0.5)^2 (f(p) - f(1 - p)) + 2 p[2p - 1] f'( p) + 0.5(f'(p) - f'(1 - p))\\
        &= -8(p - 0.5)^2 (f(p) - f(1 - p)) + 2 p[2p - 1] f'( p) + \frac{1 - 2p}{2(1 - p)} f'(p)
\end{align*}
For the above expression to be equal to zero, we must have that
\begin{align*}
    2 p[2p - 1] f'( p) + \frac{1 - 2p}{2(1 - p)} f'(p) &= 8(p - 0.5)^2 (f(p) - f(1 - p)).
\end{align*}
If we simplify the LHS, we obtain
\begin{align*}
    [2p - 1] f'(p) \left(\frac{-4(p - 0.5)^2 }{2(1 - p)} \right) &= 8(p - 0.5)^2 (f(p) - f(1 - p)).
\end{align*}
If $p > 0.5$, then the LHS is strictly negative, but the RHS is strictly positive since $f$ is strictly increasing by \Cref{lemma:basic-fact} in \Cref{app:proofs}. If $p < 0.5$, then the LHS is strictly positive, but the RHS is strictly negative, again by \Cref{lemma:basic-fact}. Hence, unless $p = 0.5$, we cannot have $\partial_{\hat p} r_{\phi_2}(\hat p) \mid_{\hat p = p} = 0$.
\end{proof}
The upshot of \Cref{prop:drift-proper-scoring} and \Cref{prop:revert-proper-scoring} is that no matter what cost $c$ we impose on the agent in the event of discovering a misreport, reporting $\hat p = p$ will not be a maximizer of $r_\phi$ when $p \in (0, 1) \setminus \{0.5\}$, for \textit{any} strictly proper scoring rule. 

\section{Bounds on the Performance of Proper Scoring Rules}
Here, our goal is to understand, with respect to popular scoring rules, how close maximizers of $r$ are to $p$. We will specify concrete forms for the probability transformation $\phi$ and Bregman divergence $D_F$. In particular, throughout we will assume $D_F(\hat p, p) = \frac{q}{2} (\hat p - p)^2$, for some $q \in [0, 2]$. 
\subsection{Quadratic: $f(\hat p) = -(1 - \hat p)^2$}
With the quadratic scoring rule and drift model, we will be able to solve for the expert's optimal forecast in closed form, as long as the cost $c$ is sufficiently large to ensure that $r$ is strictly concave. 
\begin{restatable}{proposition}{quadraticdrift}\label{prop:quadratic-drift}
    If $c > \frac{4\alpha - 2}{q}$, then
    \begin{align*}
        \argmax_{\hat p \in [0, 1]} r(\hat p) = \max\left\{0, \min\left\{p + \frac{2 \alpha p - \alpha }{qc + 2 - 4 \alpha}, 1\right\}\right\}.
    \end{align*}
\end{restatable}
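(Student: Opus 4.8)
The plan is to substitute the concrete choices $f(\hat p) = -(1-\hat p)^2$, hence $f(1-\hat p) = -\hat p^2$, together with $\phi_1(\hat p) = \alpha \hat p + (1-\alpha)p$ and $D_F(\hat p, p) = \frac{q}{2}(\hat p - p)^2$, into the expected reward \eqref{eq:expected-reward} and reduce $r$ to an explicit quadratic in $\hat p$. The useful observation is that the scoring terms collapse: writing $\phi = \phi_1(\hat p)$, one has $-\phi(1-\hat p)^2 - (1-\phi)\hat p^2 = \phi(2\hat p - 1) - \hat p^2$, which is linear in $\phi$ and hence, since $\phi$ is affine in $\hat p$, quadratic in $\hat p$. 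Adding the quadratic audit term $-\frac{qc}{2}(\hat p - p)^2$ keeps everything quadratic, so $r(\hat p) = A\hat p^2 + B\hat p + C$ with leading coefficient $A = (2\alpha - 1) - \frac{qc}{2}$.

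Second, I would read off strict concavity directly from this normal form: $r''(\hat p) = 2A = 4\alpha - 2 - qc$, which is negative exactly when $qc > 4\alpha - 2$, i.e. when $c > \frac{4\alpha - 2}{q}$ (using $q > 0$). This is precisely the hypothesis, so under it $r$ is strictly concave on the compact interval $[0,1]$ and therefore has a unique maximizer. Note also that $-2A = qc + 2 - 4\alpha > 0$ is the very quantity appearing in the denominator of the claimed formula, which is what makes that expression well-defined and fixes the sign bookkeeping for the projection step.

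Third, I would locate the maximizer. Solving the first-order condition $r'(\hat p) = 2A\hat p + B = 0$ gives the unique unconstrained stationary point $\hat p^\star = -B/(2A)$; the remaining work is the algebraic simplification showing $\hat p^\star = p + \frac{2\alpha p - \alpha}{qc + 2 - 4\alpha}$, which follows by factoring $p(qc + 2 - 4\alpha)$ out of the numerator. Finally, to pass from the unconstrained to the constrained maximizer on $[0,1]$, I would invoke strict concavity: if $\hat p^\star \in [0,1]$ it is the constrained maximizer, while if $\hat p^\star > 1$ then $r' > 0$ throughout $[0,1]$ so the maximizer is at $1$, and symmetrically it is at $0$ when $\hat p^\star < 0$. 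This is exactly the clamp $\max\{0, \min\{\hat p^\star, 1\}\}$ in the statement.

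The computation is essentially routine once the quadratic normal form is in hand, so I do not expect a genuine obstacle. The one place demanding care is the bookkeeping in the first and third steps, in particular verifying that the stationary point collapses to the stated $p + \frac{2\alpha p - \alpha}{qc + 2 - 4\alpha}$ rather than some algebraically equivalent but messier form, and keeping the sign of $A$ straight so that the concavity threshold and the denominator of the formula remain consistent.
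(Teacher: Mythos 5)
Your proof is correct and follows essentially the same route as the paper: establish strict concavity via the second derivative (with the same threshold $qc > 4\alpha - 2$), solve the first-order condition to get $\hat p^\star = p + \frac{2\alpha p - \alpha}{qc + 2 - 4\alpha}$, and clamp to $[0,1]$ using concavity. The only cosmetic difference is that you expand $r$ directly into a quadratic in $\hat p$, whereas the paper first invokes a general derivative formula (Lemma~\ref{lemma:drift-derivatives}) and then specializes to the quadratic score; both computations coincide.
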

It is true that the expert-optimal $\hat p \to p$ as $c \to \infty$. Recall that we assumed $qc > 4\alpha - 2$, so that the denominator of the fraction in the above expression is strictly positive. When $p = 0.5$, $\hat p = p$. If $p > 0.5$, then $2 \alpha p - \alpha > 0$, so that $\hat p > p$, meaning that the expert will tend to overforecast $p$. On the other hand, if $p < 0.5$, $\hat p < p$, so that the expert will tend to underforecast $p$.
\begin{proof}
$f'(\hat p) = -2(\hat p - 1)$ and $f''(\hat p) = -2$, so by \Cref{lemma:drift-derivatives},
\begin{align*}
    \partial_{\hat p} r(\hat p) &= \alpha (-(1 - \hat p)^2 + \hat p^2) + 2 \left(\alpha \hat p + (1 - \alpha) p  - \hat p\right) -q (\hat p - p) c\\
        &= 4 \alpha \hat p - \alpha + 2 \left( (1 - \alpha) p  - \hat p\right) -q (\hat p - p) c\\
    \partial_{\hat p}^2 r(\hat p) &= 4 \alpha - 2+ qc
\end{align*}
Setting $c > \frac{4\alpha - 2}{q}$ guarantees that $r$ is strictly concave. For such a $c$, a global maximizer (not necessarily in $[0, 1]$) can be found by setting the derivative to zero and solving for $\hat p$. 
\begin{align*}
    4 \alpha \hat p - \alpha + 2 \left( (1 - \alpha) p  - \hat p\right) &= q (\hat p - p) c\\
    \hat p (qc + 2 - 4 \alpha) &= -\alpha + 2(1 - \alpha) p+ qpc\\
    \hat p &= \frac{2p - 2\alpha p + qpc - \alpha}{qc + 2 - 4 \alpha}.
\end{align*}
Simplifying a little, we get
\begin{align*}
    \hat p &= \frac{2p - 2\alpha p + qpc - \alpha - 2 \alpha p + 2 \alpha p}{qc + 2 - 4 \alpha}\\
        &= \frac{p(2 - 4\alpha  + qc) - \alpha + 2 \alpha p}{qc + 2 - 4 \alpha}\\
        &= p + \frac{2 \alpha p - \alpha }{qc + 2 - 4 \alpha}.
\end{align*}
Now, if $p^* \defeq p + \frac{2 \alpha p - \alpha }{qc + 2 - 4 \alpha} > 1$, then 1 is a maximizer because $r$ is increasing in $[0, p^*]$, given that it is concave. On the other hand, if $p^* < 0$, then $0$ is a maximizer because $r$ is decreasing in $[0, 1]$, again because it is concave. The conclusion follows.
\end{proof}

\subsection{Numerical Results}
We supplement our theoretical analysis with numerical results for other scoring rules and performativity models. We analyze the quadratic, spherical, and logarithmic scoring rules, which are all strictly proper. We let $D_F(\hat p, p) = (p - \hat p)^2$ as we can absorb $q$ into the cost $c$. To approximate the expert's optimal forecast, we take the maximum of the expert's reward function evaluated at 500 equally spaced points on $[10^{-5}, 1 - 10^{-5}]$ ($10^{-5}$ because the logarithmic scoring rule is undefined at 0). 

In \Cref{fig:drift-reversion}, we plot the expert's optimal forecast against the true $p$ for different cost values. The diagonal line represents incentive-compatability: the closer a curve hews to the diagonal, the closer that the expert's optimal forecast will be to the true probability. 

\begin{figure}
     %
     \begin{subfigure}[b]{0.5\textwidth}
         \centering
         \includegraphics[width=\textwidth]{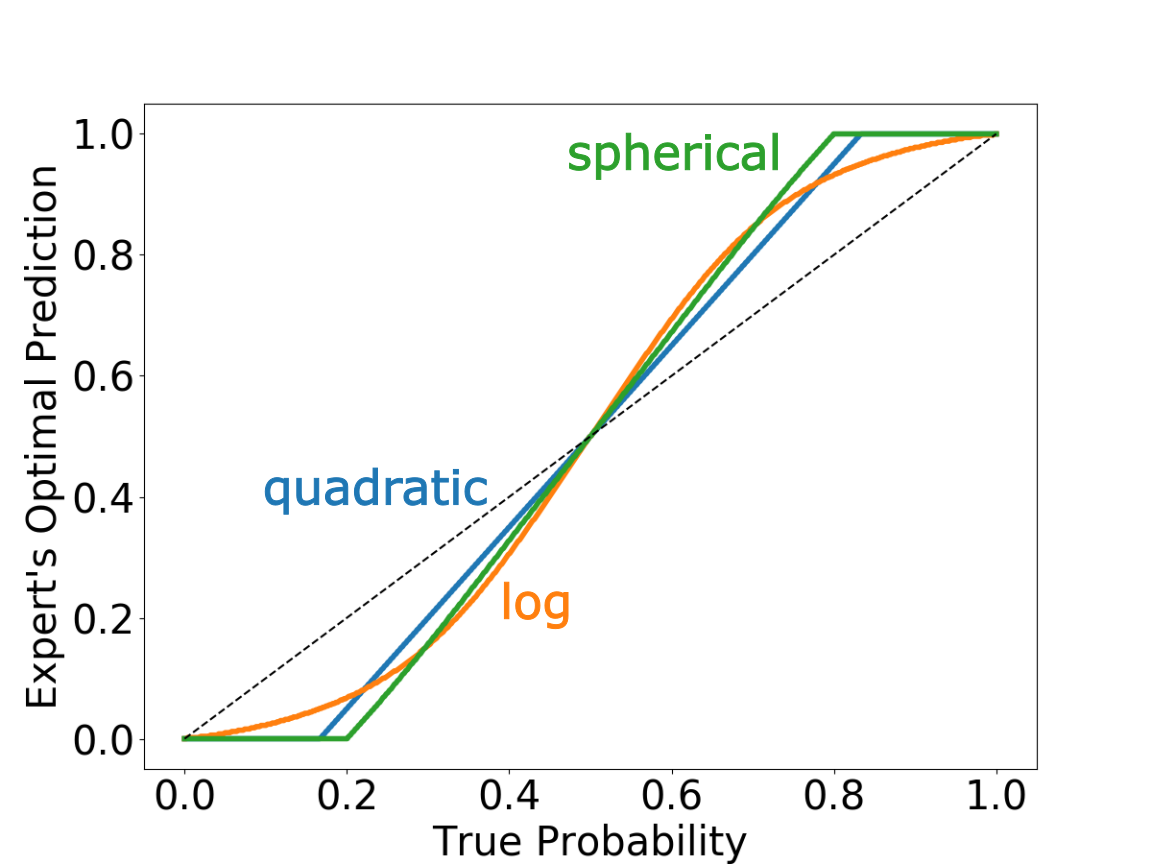}
         \caption{Drift, $c = 1$.}
     \end{subfigure}
     \begin{subfigure}[b]{0.5\textwidth}
         \centering
         \includegraphics[width=\textwidth]{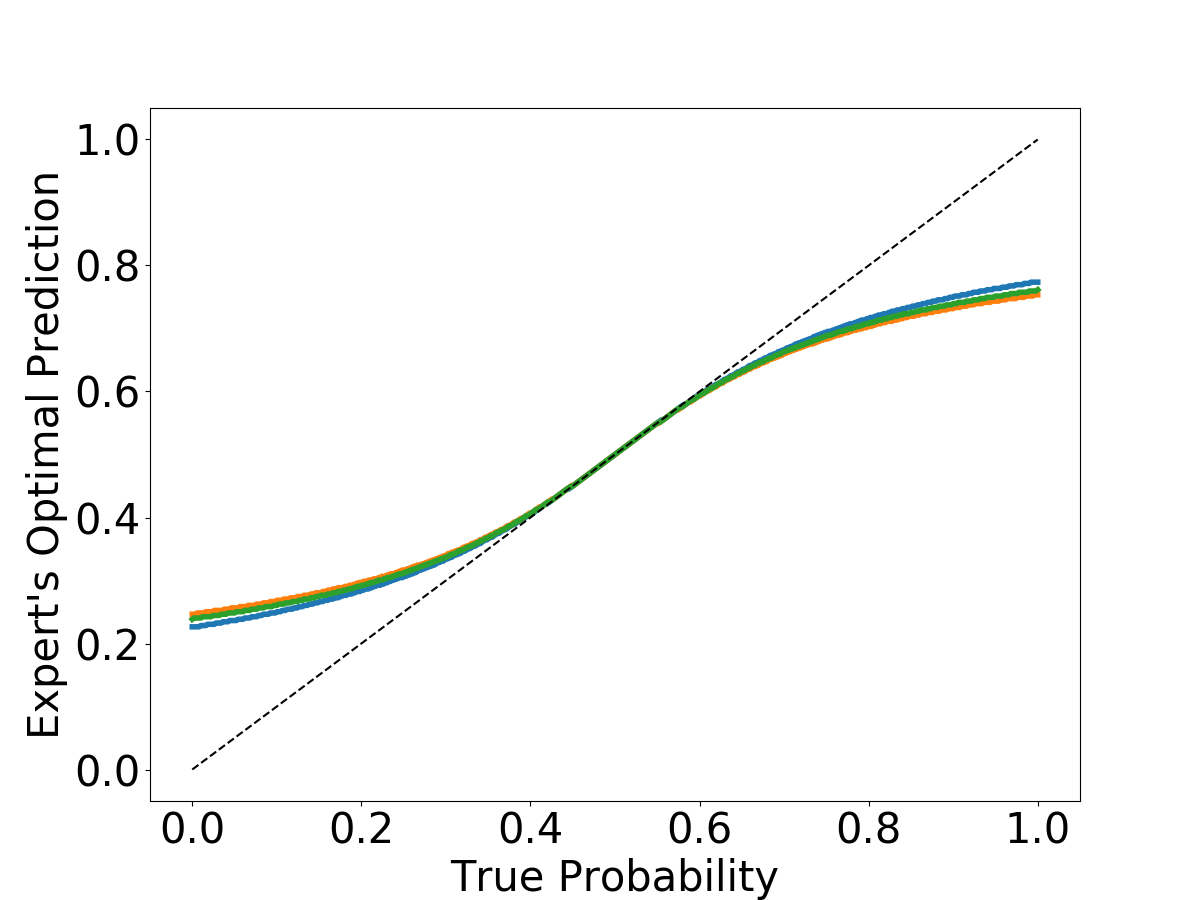}
         \caption{Reversion, $c = 1$.}
     \end{subfigure}
     %
    \caption{For the labeled, strictly proper scoring rules, we plot the true probability compared to the expert's optimal prediction.}
    \label{fig:drift-reversion}
\end{figure}


\textbf{How does the true probability $p$ affect the expert's optimal forecast?} For both drift and reversion models, any $p \notin \{0, 0.5, 1\}$ results in an optimal forecast not equal to $p$. The trends differ by the model. For the drift model, $p \in (0.5, 1)$ results in over-prediction, while $p \in (0, 0.5)$ results in under-prediction. The trend is reversed for the reversion model. To make sense of this pattern, recall that strictly proper scoring rules are strictly increasing and that under the drift model, $p$ moves towards $\hat p$. All other things being equal, predicting a larger $\hat p$ leads to a larger $p$, which results in a larger $f(\hat p)$. On the other hand, under the reversion model, $p$ will move closer to 0.5 when $\hat p$ is closer to the endpoints $\hat p$. Hence, a smaller $\hat p$ than $p$ is to the expert's advantage. Of course, one must also remember the influence of the expected cost term, $\frac{qc}{2}(p - \hat p)^2$, which pushes $\hat p$ closer to $p$. 

\textbf{If one had to use one of the three strictly proper scoring rules, is there a best choice to ensure that the expert's optimal forecasts are as close to the true $p$ as possible?} Under the reversion model, the quadratic and spherical scoring rule curves hew closest to the diagonal than the logarithmic scoring rule curves. However, in the drift model, there is no curve that hews closest than the others for all values of $p$. The upshot is that if one does not have an accurate model of performativity, it is difficult to select a strictly proper scoring rule which comes the closest to incentive-compatability.

\textbf{How does the cost $c$ affect the expert's optimal forecast?} In \Cref{fig:drift-increasing-cost,fig:reversion-increasing-cost}, as the cost $c$ increases, the expert's optimal forecast becomes closer to the true probability $p$. This trend makes intuitive sense, as the higher the cost of a potential failed audit, the more the expert should try to minimize that cost by reporting the true $p$.

\begin{figure}[!htb]
      \begin{subfigure}[b]{0.33\textwidth}
         \centering
         \includegraphics[width=\textwidth]{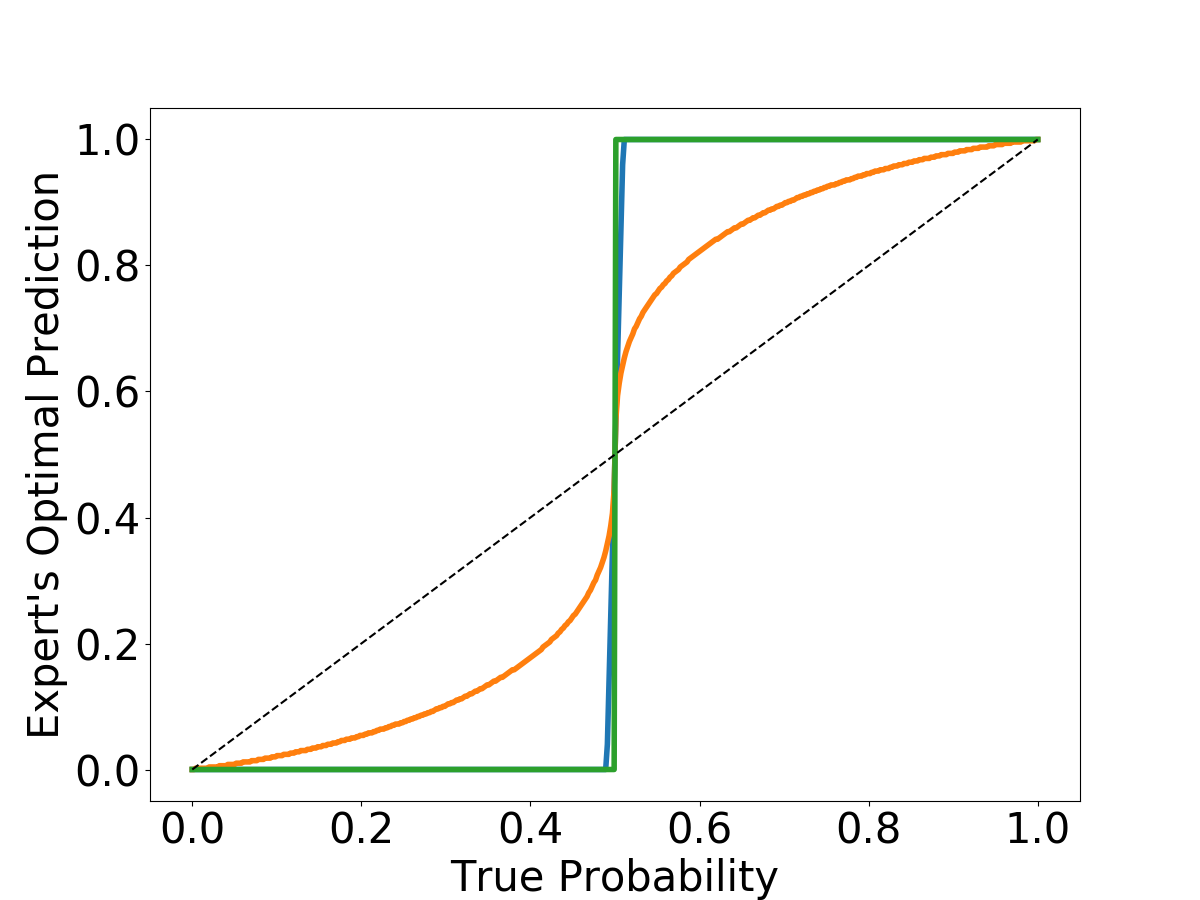}
         \caption{$c = 0.01$}
     \end{subfigure}
     \begin{subfigure}[b]{0.33\textwidth}
         \centering
         \includegraphics[width=\textwidth]{figs/drift-c=1-q=2-alpha=0.5-labeled.png}
         \caption{$c = 1$}
     \end{subfigure}
     \begin{subfigure}[b]{0.33\textwidth}
         \centering
         \includegraphics[width=\textwidth]{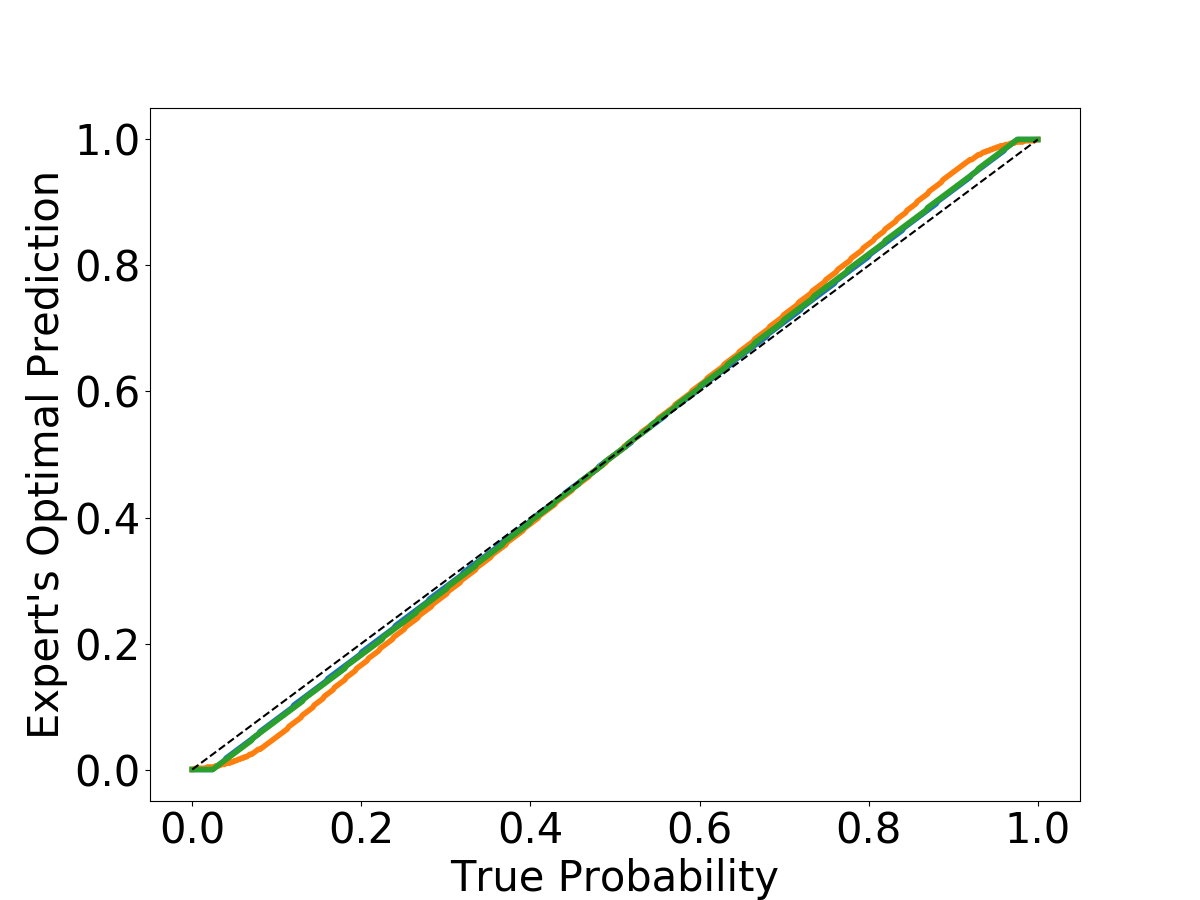}
         \caption{$c = 10$}
     \end{subfigure}
    \caption{Drift model with increasing cost. We set $q = 2$ and $\alpha = 0.5$.}
    \label{fig:drift-increasing-cost}
\end{figure}
\begin{figure}[!htb]
      \begin{subfigure}[b]{0.33\textwidth}
         \centering
         \includegraphics[width=\textwidth]{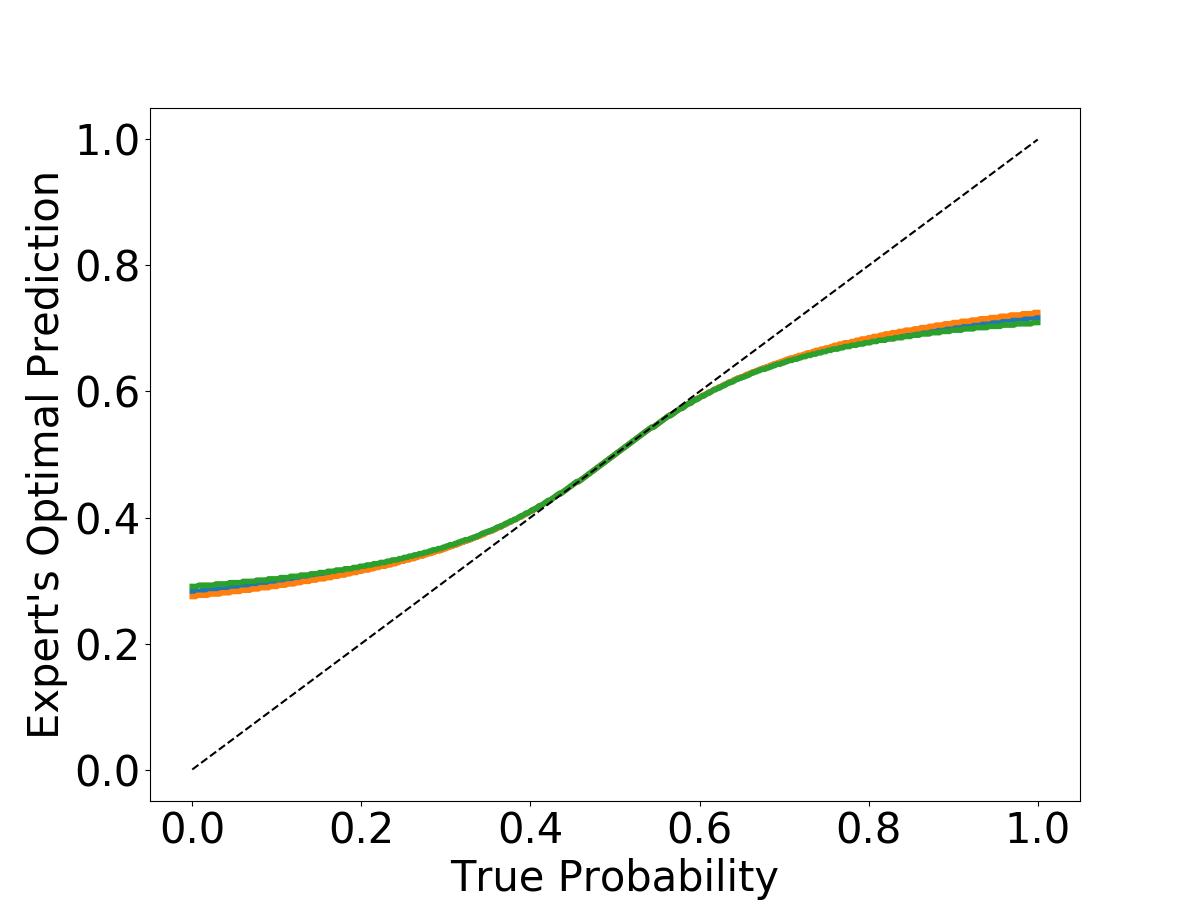}
         \caption{$c = 0.01$}
     \end{subfigure}
     \begin{subfigure}[b]{0.33\textwidth}
         \centering
         \includegraphics[width=\textwidth]{figs/reversion-c=1-q=2.png}
         \caption{$c = 1$}
     \end{subfigure}
     \begin{subfigure}[b]{0.33\textwidth}
         \centering
         \includegraphics[width=\textwidth]{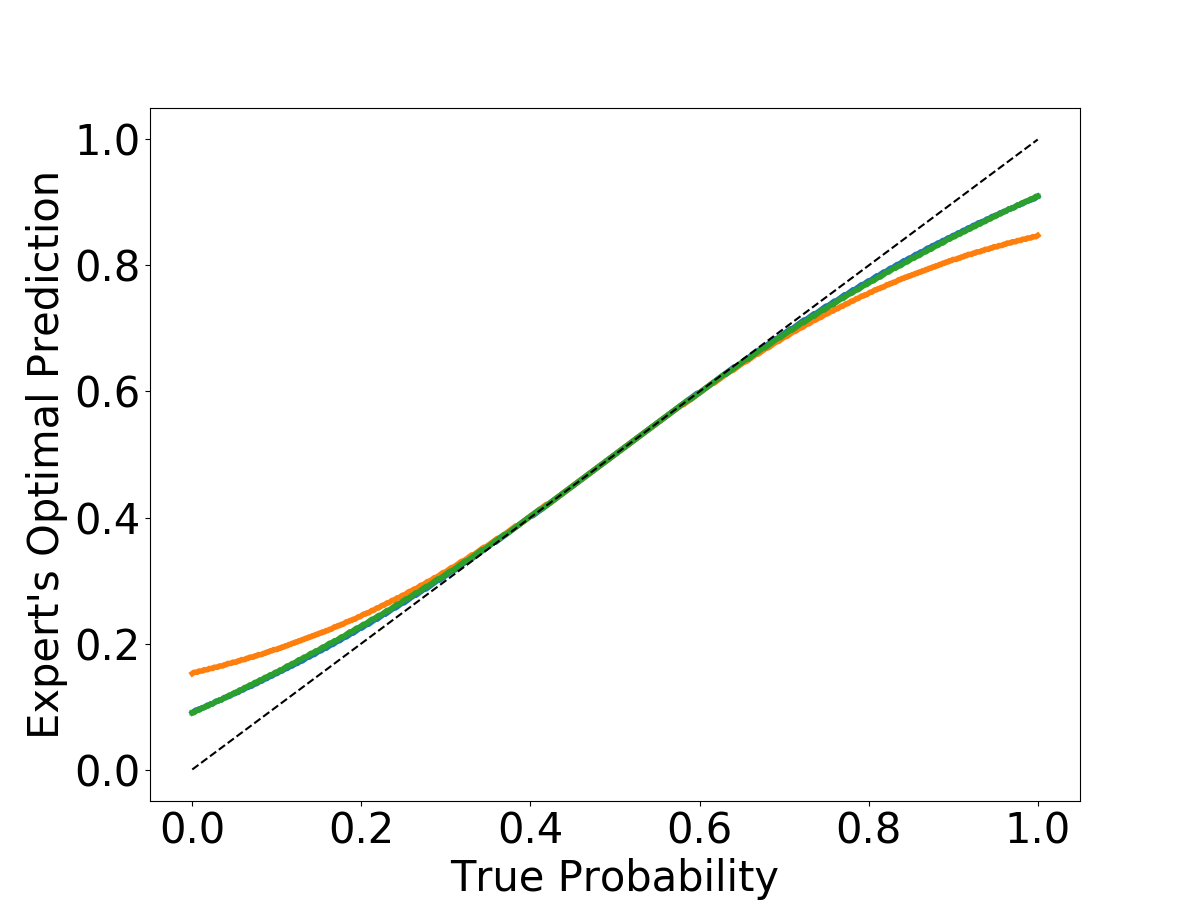}
         \caption{$c = 10$}
     \end{subfigure}
    \caption{Reversion model with increasing cost. We set $q = 2$.}
    \label{fig:reversion-increasing-cost}
\end{figure}

\textbf{How does $\alpha$ in the drift model affect the expert's optimal forecast?} In \Cref{fig:drift-changing-alpha}, we plot the expert's optimal forecast for different values of $\alpha$ under the drift model. Overall, the large $\alpha$ is, the further away the expert's optimal forecast tends to be from $p$. An intuitive way of understanding this phenomenon is that as $\alpha$ increases, the true probability moves closer to the expert's forecast than for smaller $\alpha$.

\begin{figure}[!htb]
      \begin{subfigure}[b]{0.33\textwidth}
         \centering
         \includegraphics[width=\textwidth]{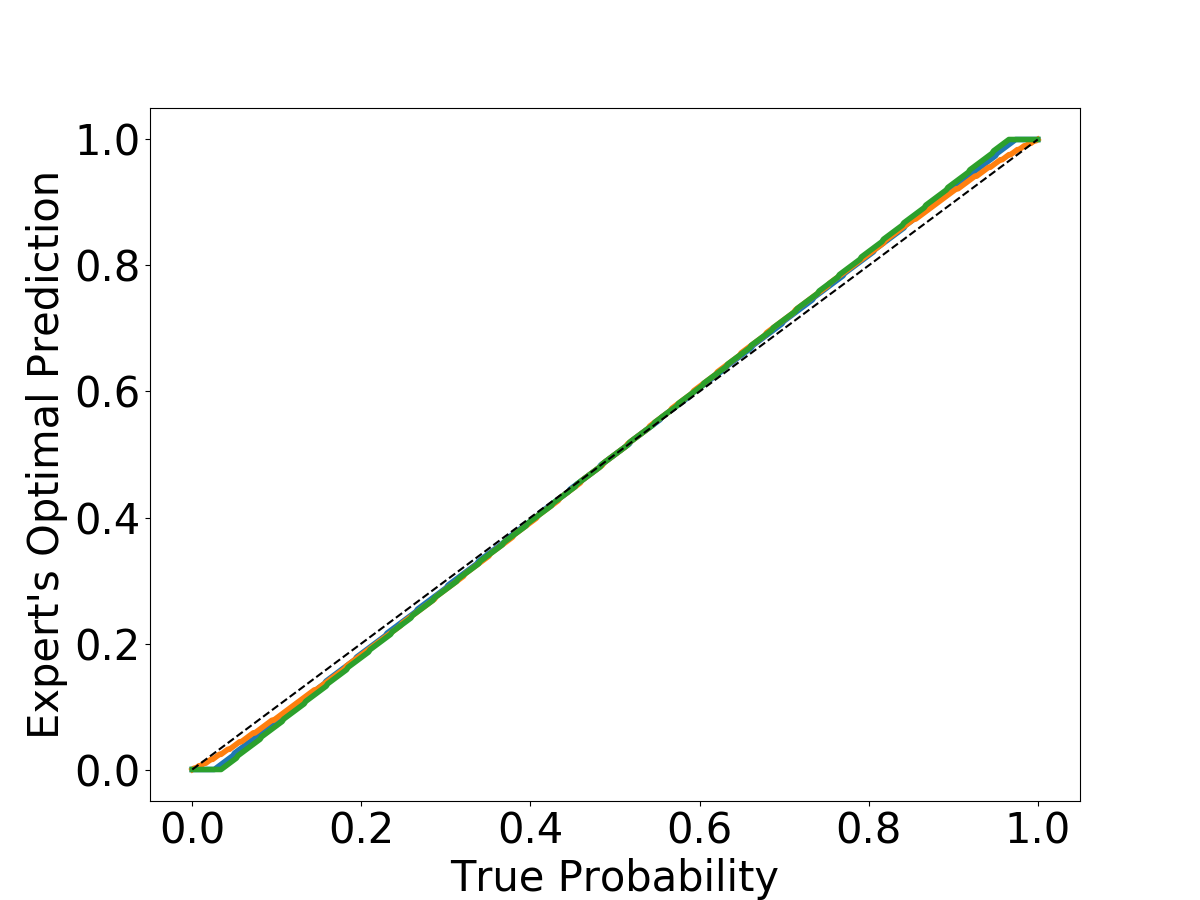}
         \caption{$\alpha = 0.1$}
     \end{subfigure}
     \begin{subfigure}[b]{0.33\textwidth}
         \centering
         \includegraphics[width=\textwidth]{figs/drift-c=1-q=2-alpha=0.5-labeled.png}
         \caption{$\alpha = 0.5$}
     \end{subfigure}
     \begin{subfigure}[b]{0.33\textwidth}
         \centering
         \includegraphics[width=\textwidth]{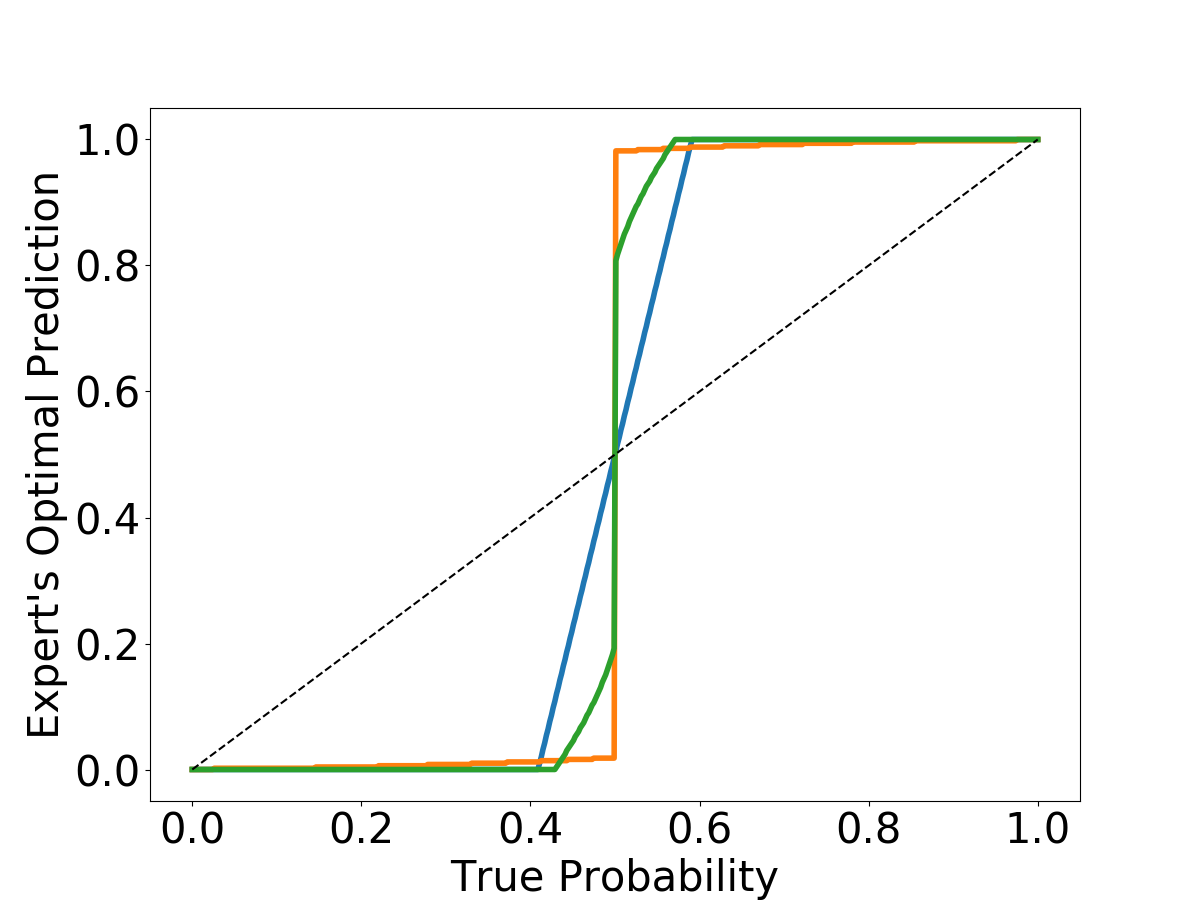}
         \caption{$\alpha = 0.9$}
     \end{subfigure}
    \caption{Drift model with changing $\alpha$. Here, we set $c = 1, q = 2$. }
    \label{fig:drift-changing-alpha}
\end{figure}

\section{Improper Scoring Rules that are Incentive-compatible}
Can we design a scoring rule $f$ such that $r(\hat p)$ has a single maximum at $\hat p = p$, for any $p \in (0, 1)$? One simple answer is to define $f(\hat p) = k > 0$--that is, the scoring rule is constant and positive. Of course, such an $f$ is not strictly proper because $f'(\hat p) = 0$. If we additionally assume that $D_F(\hat p, p)$ is strictly convex in the first argument, then $r(\hat p) = k - D_F(\hat p, p)c$ is strictly concave and is maximized at $\hat p = p$ because $D_F(\hat p, p)$ is minimized at $\hat p = p$. 

The intuition behind this solution is as follows. If the scoring rule is constant, then the only portion of the expected reward that the expert has control over is the term involving the audit cost, $D_F(\hat p, p)c$. If we assume that the probability of an audit is zero only when $\hat p = p$, then the expert maximizes reward by minimizing the probability of an audit. 

Unfortunately, the utility of this solution in practical situations can vary. For human experts, we can think about the constant scoring rule as providing a fixed salary and subjecting the human expert to an expected cost for not reporting $\hat p = p$. The salary incentivizes human experts to participate in the prediction game in the first place, while the audit cost disincentivizes manipulation. On the other hand, it is unclear how these ideas extend to algorithmic systems. In particular, how are we to impose an audit cost?

\section{Related Work}
That one's actions can change others' behaviours has been studied in economics. \citet{lucas_jr_econometric_1976} argues that because economic interventions induce changes in behaviour, economic models derived from observational data are likely invalid when they are used to simulate economic interventions. It is thus essential to understand how exactly behaviour can change; behavioural economics \citep{thaler_behavioral_2016} aims for a psychologically realistic description of human behaviour. 

Reward design is a key problem in AI alignment \citep{amodei_concrete_2016}. Scoring rules \citep{gneiting_strictly_2007} have mostly seen application in weather, psychology, and economics \citep{carvalho_overview_2016}, but \citet{armstrong_good_2018}, the closest work to ours, uses proper scoring rules to design non-manipulative AI oracles. \citet{armstrong_good_2018} focuses on settings where oracles can be shut off and where the set of possible predictions, while we focus modeling performativity with audits. More recently, \citet{everitt_agent_2021} construct a causal framework for understanding the incentives of agents to modify their environment. 

Our problem is similar to the problem of prediction with expert advice that is studied in online learning \citep{cesa-bianchi_prediction_2006}. Recent work has studied this problem under the assumptions that experts can misreport to maximize their own reward \citep{roughgarden_online_2017,freeman_no-regret_2020,frongillo_efficient_2021}.

In the machine-learning community, works in strategic classification \citep{hardt_strategic_2016} and performative prediction \citep{perdomo_performative_2020} have modeled the changes in the data distribution that a ML model induces as a result of strategic behaviour. Generally, one can view strategic behaviour as a way to incentivize improvement \citep{kleinberg_how_2019} or as disadvantageous. Recent work has also investigated the amplification of disparities when ML models account for strategic behaviour \citep{hu_disparate_2019,milli_social_2019,liu_disparate_2020}.


\section{Conclusion}
We analyze a setting of binary prediction where the expert is able to change the true probability with their forecast, subject to an expected cost for lying. Under two classes of models, we showed that strictly proper scoring rules fail to be incentive-compatible. Our numerical results showed that the expert's optimal forecast can vary widely depending on the strictly proper scoring rule, the model of performativity, and the true probability. Finally, we discussed a simple class of scoring rules for which the expert's optimal forecast is the true probability. 

Some limitations of the present work exist. First, we assumed the ability to impose an audit cost on experts for manipulation. It is unclear how to impose such a cost on algorithmic systems. Furthermore, our form of the audit cost assumes that the probability of an audit is proportional to the difference between the prediction and the true $p$. This assumption implicitly encodes knowledge of $p$; in some situations, an inaccurate forecast may appear as plausible as the true $p$. Second, the issue of what counts as manipulation remains underexplored. Although we assumed that any report $\hat p \neq p$ is manipulation, the reality can be murkier. If the expert changes the true $p$ no matter what the prediction $\hat p$ is, in a strict sense the expert has no choice but to manipulate the world. The natural question is, what manipulation is most beneficial for the user? Third, experts may have to learn the true probability distribution. Without expert omniscience, scoring rules should incentivize learning about the distribution, as well as disincentivize manipulation.

\bibliographystyle{plainnat}
\bibliography{strat_ml_neurips_2021.bib}

\appendix

\section{Appendix}
\subsection{Omitted Proofs}\label{app:proofs}
\begin{restatable}{lemma}{basicfact}\label{lemma:basic-fact}[Lemma 2.5 of \citet{neyman_binary_2021}]
Assume that $f$ is a twice-differentiable, strictly proper scoring rule. It holds that
\begin{align*}
    \hat p f'(\hat p) &= (1 - \hat p) f'(1 - \hat p).
\end{align*}
Additionally, $f$ is strictly increasing on $(0, 1)$.
\end{restatable}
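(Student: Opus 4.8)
The plan is to extract both claims from strict propriety applied in the no‑performativity setting, i.e.\ with expected reward $r_p(\hat p) = p f(\hat p) + (1 - p) f(1 - \hat p)$. By the definition of a strictly proper scoring rule, for every $p \in (0,1)$ the unique maximizer of $r_p$ over $[0,1]$ is $\hat p = p$, and since $p$ is interior the first‑order condition applies.

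For the identity I would simply differentiate and set $r_p'(p) = 0$. Since
\begin{align*}
    r_p'(\hat p) = p f'(\hat p) - (1 - p) f'(1 - \hat p),
\end{align*}
evaluating at the interior maximizer $\hat p = p$ gives $p f'(p) = (1 - p) f'(1 - p)$; renaming the free variable $p$ as $\hat p$ yields the asserted $\hat p f'(\hat p) = (1 - \hat p) f'(1 - \hat p)$ for all $\hat p \in (0,1)$. This half is routine.

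For monotonicity the identity alone does not suffice, since a constant $f$ satisfies it yet is not strictly proper, so I would invoke strictness directly. Fix $u < v$ in $(0,1)$ and use the two strict inequalities $r_u(u) > r_u(v)$ and $r_v(v) > r_v(u)$. Adding them and collecting the coefficients of $f(u), f(v), f(1-u), f(1-v)$ gives, after dividing by $v - u > 0$, the relation $f(v) - f(1-v) > f(u) - f(1-u)$; that is, the antisymmetric combination $\delta(\hat p) \defeq f(\hat p) - f(1 - \hat p)$ is strictly increasing on $(0,1)$.

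Finally I would combine the two facts. Differentiating gives $\delta'(\hat p) = f'(\hat p) + f'(1 - \hat p) \ge 0$, and substituting $f'(1-\hat p) = \tfrac{\hat p}{1-\hat p} f'(\hat p)$ from the identity yields $\tfrac{1}{1-\hat p} f'(\hat p) \ge 0$, hence $f'(\hat p) \ge 0$ and $f$ is non‑decreasing. To upgrade to strict monotonicity, suppose $f$ were constant on some subinterval; then $f'$ vanishes there, the identity forces $f'$ to vanish on the reflected subinterval as well, so $\delta' = 0$ on the original subinterval, contradicting that $\delta$ is strictly increasing. Thus $f$ is strictly increasing on $(0,1)$. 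I expect the monotonicity half to be the main obstacle: the key realization is that one must pass through the auxiliary strictly increasing function $\delta$ obtained by \emph{adding} two propriety inequalities, and only then reconcile it with the first‑order identity, rather than attempting to read monotonicity off the identity alone.
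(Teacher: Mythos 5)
Your proposal is correct, but it takes a genuinely different route from the paper. The paper outsources both ingredients to the cited Lemma 2.5 of \citet{neyman_binary_2021}: it quotes that lemma for the identity $\hat p f'(\hat p) = (1-\hat p) f'(1-\hat p)$ and for the fact that $f' > 0$ almost everywhere on $(0,1)$, and then upgrades to strict monotonicity by a short measure-theoretic contradiction (if $f$ were constant on some $(x,y)$, then $f'$ would vanish on a set of positive measure). You instead prove everything from scratch: the identity falls out of the first-order condition $r_p'(p) = 0$ at the interior maximizer guaranteed by strict propriety, and strict monotonicity comes from adding the two propriety inequalities $r_u(u) > r_u(v)$ and $r_v(v) > r_v(u)$ to show that $\delta(\hat p) = f(\hat p) - f(1-\hat p)$ is strictly increasing, then reconciling $\delta'(\hat p) = f'(\hat p)/(1-\hat p) \ge 0$ with the identity to rule out any interval on which $f$ is constant. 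Both arguments are sound; I checked that your addition of inequalities does collect to $(v-u)\left[\left(f(v) - f(1-v)\right) - \left(f(u) - f(1-u)\right)\right] > 0$ as claimed, and your pointwise contradiction via $\delta$ correctly replaces the paper's ``positive almost everywhere'' argument. What your version buys is self-containedness and the avoidance of any appeal to almost-everywhere statements; what the paper's version buys is brevity, at the cost of leaning entirely on the external lemma. Your instinct that the monotonicity half is the real content, and that it cannot be read off the identity alone (a constant $f$ satisfies the identity), matches exactly the point the paper is implicitly relying on Neyman's lemma to handle.
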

\begin{proof}
Lemma 2.5 of \citet{neyman_binary_2021} gives the first part of the claim. For the second part of the claim, Lemma 2.5 of \citet{neyman_binary_2021} additionally shows that $f'(\hat p) > 0$ almost everywhere on $(0, 1)$. If $f$ were not strictly increasing, then there exists $x < y$ such that $f(x) = f(y)$. Since $f$ is weakly increasing given that $f'(\hat p) \geq 0$ everywhere, as in the proof of Lemma 2.5, it must be that $f$ is constant on $(x, y)$, so that $f'(\hat p) = 0$ on $(x ,y)$. However, this result would contradict the fact that $f'(\hat p) > 0$ almost everywhere because $(x, y)$ has non-zero measure. Hence, it must be that $f$ is strictly increasing. 
\end{proof}




\begin{restatable}{lemma}{driftderivatives}\label{lemma:drift-derivatives}
    Let $f$ be a proper scoring rule. For $\hat p \in (0, 1)$, it holds that
    \begin{align*}
        \partial_{\hat p} r_\driftmodel(\hat p) &= \alpha (f(\hat p) - f(1 - \hat p)) + f'(\hat p) \left(\frac{\alpha \hat p + (1 - \alpha) p  - \hat p}{1 - \hat p}\right) -q (\hat p - p) c.\\
    \end{align*}
\end{restatable}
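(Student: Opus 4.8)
The plan is to obtain the claimed formula by differentiating the expected reward \eqref{eq:expected-reward} directly, specializing to the drift transformation $\phi_1(\hat p) = \alpha \hat p + (1-\alpha)p$ and the quadratic audit cost $D_F(\hat p, p) = \frac{q}{2}(\hat p - p)^2$ fixed earlier in this section, and then collapsing the two first-order terms involving $f'$ by means of the proper-scoring-rule identity from \Cref{lemma:basic-fact}. First I would differentiate term by term, recalling that $\tfrac{d}{d\hat p} f(1 - \hat p) = -f'(1 - \hat p)$ by the chain rule and that $\phi_1'(\hat p) = \alpha$. This produces
\begin{align*}
    \partial_{\hat p} r_{\phi_1}(\hat p) = \alpha\big(f(\hat p) - f(1 - \hat p)\big) + \phi_1(\hat p) f'(\hat p) - \big(1 - \phi_1(\hat p)\big) f'(1 - \hat p) - q(\hat p - p)c,
\end{align*}
in which the first and last summands already coincide with the corresponding terms in the statement.

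The crux is then to show that the middle pair equals $f'(\hat p)\,\frac{\alpha \hat p + (1-\alpha)p - \hat p}{1 - \hat p}$. Here I would invoke \Cref{lemma:basic-fact}, which gives $f'(1 - \hat p) = \frac{\hat p}{1 - \hat p} f'(\hat p)$ for $\hat p \in (0,1)$, and substitute it to eliminate $f'(1 - \hat p)$. Factoring out $f'(\hat p)$ and placing the remaining scalar over the common denominator $1 - \hat p$, the numerator becomes $\phi_1(\hat p)(1 - \hat p) - \big(1 - \phi_1(\hat p)\big)\hat p$, which telescopes to $\phi_1(\hat p) - \hat p$; substituting $\phi_1(\hat p) = \alpha \hat p + (1-\alpha)p$ then yields exactly the claimed fraction.

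The computation is routine, so the only genuine obstacle is bookkeeping: keeping the chain-rule sign on $f(1-\hat p)$ correct and recognizing that the numerator collapses cleanly to $\phi_1(\hat p)-\hat p$ rather than leaving residual $\hat p$-dependence. I would also emphasize that the restriction $\hat p \in (0,1)$ is precisely what legitimizes the division by $1 - \hat p$ and permits applying the identity of \Cref{lemma:basic-fact}, which is stated on the open interval.
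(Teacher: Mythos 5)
Your proposal is correct and follows essentially the same route as the paper's proof: differentiate \eqref{eq:expected-reward} term by term with $\phi_1'(\hat p) = \alpha$, then use the identity $\hat p f'(\hat p) = (1-\hat p) f'(1-\hat p)$ from \Cref{lemma:basic-fact} to eliminate $f'(1-\hat p)$, combine over the common denominator $1-\hat p$, and observe that the numerator collapses to $\phi_1(\hat p) - \hat p$. No gaps; your remark that $\hat p \in (0,1)$ is what justifies the division by $1-\hat p$ is a point the paper leaves implicit.
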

\begin{proof}
We make extensive use of the fact that for proper scoring rules, $\hat p f'(\hat p) = (1 - \hat p) f'(1 - \hat p)$ for $\hat p \in (0, 1)$. 
\begin{align*}
    \partial_{\hat p} r(\hat p) &= \phi_1'(\hat p)(f(\hat p) - f(1 - \hat p)) + \phi_1(\hat p) f'(\hat p) - (1 - \phi_1(\hat p))f'(1 - \hat p) -q (\hat p - p) c\\
        &=\phi_1'(\hat p)(f(\hat p) - f(1 - \hat p)) + f'(\hat p) \left(\phi_1(\hat p)- (1 - \phi_1(\hat p))\frac{\hat p}{1 - \hat p}\right) -q (\hat p - p) c\\
        &= \phi_1'(\hat p)(f(\hat p) - f(1 - \hat p)) + f'(\hat p) \left(\frac{\phi_1(\hat p) (1 - \hat p)- \hat p (1 - \phi_1(\hat p))}{1 - \hat p}\right) -q (\hat p - p) c\\
        &= \phi_1'(\hat p) (f(\hat p) - f(1 - \hat p)) + f'(\hat p) \left(\frac{\phi_1(\hat p)  - \hat p}{1 - \hat p}\right) -q (\hat p - p) c\\
        &= \alpha (f(\hat p) - f(1 - \hat p)) + f'(\hat p) \left(\frac{\alpha \hat p + (1 - \alpha) p  - \hat p}{1 - \hat p}\right) -q (\hat p - p) c.
\end{align*}
\end{proof}



\end{document}